\newtheorem{definition}{Definition}
\newtheorem{theorem}{Theorem}
\newtheorem{example}{Example}
\definecolor{darkgreen}{RGB}{68,180,46}
\definecolor{darkgray}{RGB}{80,80,80}
\newcommand{\ucap}{\ensuremath{\mathit{UCAP}}}
\newcommand{\iucap}{\ensuremath{\mathit{IUCAP}}}
\newcommand{\stackk}[4]{
  \foreach \i in {1,...,#1} {
    \draw[fill=white!10] #2 ++({0.1*(#1)},{-0.1*(#1)}) ++({-0.1*\i},{0.1*\i}) rectangle +#3;
  }
  \path #2 -- +#3 node[pos=0.5] {#4};
}
\newcommand{\stack}[6]{
  \foreach \i in {1,...,#1} {
    \draw[fill=#5,draw=#6] #2 ++({0.1*(#1)},{-0.1*(#1)}) ++({-0.1*\i},{0.1*\i}) rectangle +#3;
  }
  \path #2 -- +#3 node[pos=0.5,text=#6] {#4};
}
\definecolor{aauBlue}{RGB}{85,158,195}
\definecolor{aauBlue2}{RGB}{67,111,129}
\newcommand{\dnot}{\ensuremath{\mathit{not}\,}}
\begin{document}

\lefttitle{A. Tarzariol, M. Gebser, K. Schekotihin, and M. Law}

\jnlPage{1}{16}
\jnlDoiYr{2021}
\doival{10.1017/xxxxx}

\title[Efficient lifting of SBCs for complex combinatorial problems]{Efficient lifting of symmetry breaking constraints for complex combinatorial problems
}

\begin{authgrp}
\author{\sn{Alice} \gn{Tarzariol}}
\affiliation{University of Klagenfurt, Austria}
\author{\sn{Martin} \gn{Gebser}}
\affiliation{University of Klagenfurt, Austria \\Graz University of Technology, Austria}
\author{\sn{Mark} \gn{Law}}
\affiliation{Imperial College London, UK}
\author{\sn{Konstantin} \gn{Schekotihin}}
\affiliation{University of Klagenfurt, Austria}
\end{authgrp}

\history{\sub{xx xx xxxx;} \rev{xx xx xxxx;} \acc{xx xx xxxx}}

\maketitle

\begin{abstract}

Many industrial applications require finding solutions to challenging combinatorial problems. Efficient elimination of symmetric solution candidates is one of the key enablers for high-performance solving. However, existing model-based approaches for symmetry breaking are limited to problems for which a set of representative and easily-solvable instances is available, which is often not the case in practical applications. 
This work extends the learning framework and implementation of a model-based approach for Answer Set Programming to overcome these limitations and address challenging problems, such as the Partner Units Problem. 
In particular, we incorporate a new conflict analysis algorithm in the Inductive Logic Programming system \textsc{ILASP}, redefine the learning task, and suggest a new example generation method to scale up the approach.
The experiments conducted for different kinds of Partner Units Problem instances demonstrate the applicability of our approach and the computational benefits due to the first-order constraints learned.

\textbf{Under consideration for acceptance in
Theory and Practice of Logic Programming (TPLP).
}
\end{abstract}

\begin{keywords}
Answer Set Programming, Inductive Logic Programming, Symmetry Breaking Constraints
\end{keywords}

\section{Introduction}\label{sec:introduction}


Finding solutions to hard combinatorial problems is important for various applications, including configuration, scheduling, or planning. Modern declarative solving approaches allow programmers to easily encode various problems and then use domain-independent solvers to find solutions for given instances. The search performance depends highly on both the encoding quality and selected solver parameters. The latter issue can partially be solved using portfolio solvers that use machine learning to select the best possible parametrization of underlying solving algorithms \citep{portfolio}. However, writing an optimal encoding remains a challenge that requires an experienced programmer to clearly understand the problem up to details, such as possible structures present in instances, their invariants, and symmetries. 

Automatic computation of Symmetry Breaking Constraints (SBCs) can greatly simplify the programming task by extending a given encoding with constraints eliminating symmetric solution candidates, i.e., a set of candidates where each one can be obtained from another by renaming constants \citep{margot07a,DBLP:conf/sat/KatebiSM10,walsh12a}. Existing approaches compute SBCs either for a first-order problem encoding or for only one instance of a given problem. The latter -- \textit{instance-specific} methods -- compute SBCs online, i.e., before or during each invocation of a solver \citep{DBLP:conf/cp/Puget05,cojejepesm06a,drtiwa11a}. As a result, the obtained SBCs are not transferable to other instances and their computation might significantly increase the total solving time since the problem of finding SBCs is intractable. The \textit{model-based} methods are usually applied offline and aim at finding general SBCs breaking symmetries of a class of instances. However, these methods are either limited to local symmetries occurring due to definitions of variable domains \citep{debobrde16a} or require representative sets of instances \citep{DBLP:conf/cpaior/MearsBWD08,tagesc21a} to identify SBCs that highly likely eliminate symmetries for all instances of this class. Roughly, model-based approaches apply graph-based methods, e.g., \textsc{saucy} \citep{saucy}, to find candidate symmetries of the given instances and then lift the obtained information to first-order SBCs.

The main challenge in the application of model-based approaches is that they must be able to access or generate instances that 
\begin{enumerate*}[label=\emph{(\roman*)}]
  \item comprise symmetries representative for the whole instance distribution, and 
  \item are simple enough to allow the implementation to compute all of their solutions.
\end{enumerate*}
Consider a small example of the well-studied \textit{Partner Unit Problem} (PUP) \citep{DBLP:conf/cpaior/AschingerDFGJRT11,DBLP:journals/jcss/TeppanFG16}, which is an abstract representation of  configuration problems occurring in railway safety or building security systems. As shown in Fig.\ \ref{fig:pup-ex}a, the input of the problem is given by a set of units $U$ and a bipartite graph $G=(S,Z,E)$, where $S$ is a set of sensors, $Z$ is a set of security/safety zones, and $E$ is a relation between $S$ and $Z$. The task is to find a partition of vertices $v \in S\cup Z$ into bags $u_i \in U$, presented in Fig.\ \ref{fig:pup-ex}b, such that the following requirements hold for each bag:
\begin{enumerate*}[label=\emph{(\roman*)}] 
  \item the bag contains at most $\ucap$ many sensors and $\ucap$ many zones; and
  \item the bag has at most $\iucap$ adjacent bags, where the bags $u_1$ and $u_2$ are adjacent whenever $v_i \in u_1$ and $v_j \in u_2$ for some $(v_i, v_j) \in E$.
\end{enumerate*}
The given example shows the smallest instance representing a class of building security systems named \textit{double} by \cite{DBLP:conf/cpaior/AschingerDFGJRT11}. 
Despite being the simplest instance, it has $145368$ solutions, $98.9\%$ of which can be identified as symmetric (for instance, by renaming the units of a solution). Therefore, the enumeration of symmetries for PUP instances is problematic, even for the smallest and simplest ones.
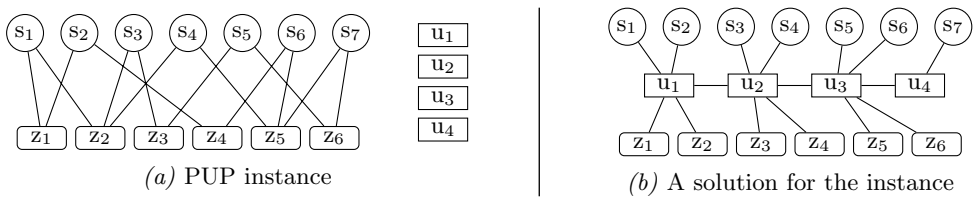
\begin{figure}[b]
  \centering
%
%
%
%

\tikzstyle{sensor} = [draw, circle, inner sep=0em, minimum width=1.5em]
\tikzstyle{zone} =   [draw, rectangle, inner sep=.2em, minimum width=2em, rounded corners=0.2em]
\tikzstyle{unit} =   [draw, rectangle, inner sep=.2em, minimum width=2em]

\begin{minipage}{.5\textwidth}
\centering
\begin{tikzpicture}[x=1.7em,y=1.7em,baseline=0pt]
\foreach \x in {0,...,6}
{
    \pgfmathtruncatemacro{\y}{\x+1}
    \node[sensor] (s\y) at (1.3*\x,1.25) {$\mathrm{s_{\y}}$};
}

\foreach \x in {0,...,5}
{
    \pgfmathtruncatemacro{\y}{\x+1}
    \node[zone] (z\y) at (.4+1.4*\x,-1.25) {$\mathrm{z_\y}$};
}

\pgfmathsetmacro{\yp}{1.2}
\foreach \x in {1,...,4}
{
    \pgfmathsetmacro{\y}{\yp-.75}
    \node[unit] (u\y) at (10,\yp) {$\mathrm{u_\x}$};
    \global\let\yp=\y
}

\draw 
(z1)  -- (s1)
(z1)  -- (s2)
(z2)  -- (s1)
(z2)  -- (s4)
(z2)  -- (s3)
(z3)  -- (s3)
(z3)  -- (s5)
(z4)  -- (s2)
(z4)  -- (s6)
(z5)  -- (s6)
(z5)  -- (s4)
(z5)  -- (s7)
(z6)  -- (s7)
(z6)  -- (s5);
\end{tikzpicture}%
\vspace{5pt}
\\ \emph{(a)} PUP instance
\end{minipage}%
\hfill\vline\hfill
\begin{minipage}{.4\textwidth}
\centering
\begin{tikzpicture}[x=1.7em,y=1.7em,baseline=0pt]

\foreach \x in {0,...,6}
{
    \pgfmathtruncatemacro{\y}{\x+1}
    \node[sensor] (s\y) at (1.3*\x,1.4) {$\mathrm{s_{\y}}$};
}

\foreach \x in {0,...,5}
{
    \pgfmathtruncatemacro{\y}{\x+1}
    \node[zone] (z\y) at (.4+1.4*\x,-1.4) {$\mathrm{z_\y}$};
}

\foreach \x in {0,...,3}
{
    \pgfmathtruncatemacro{\y}{\x+1}
    \node[unit] (u\y) at (1+2*\x,0) {$\mathrm{u_\y}$};
}

\draw 
(z1) -- (u1) -- (s1)
(z2) -- (u1) -- (s2)
(z3) -- (u2) -- (s3)
(z4) -- (u2) -- (s4)
(z5) -- (u3) -- (s5)
(s7) -- (u4) 
(z6) -- (u3) -- (s6);

\draw (u1) -- (u2) -- (u3) -- (u4);
\end{tikzpicture}\vspace{5pt}
\\ 
\centering \emph{(b)} A solution for the instance
\end{minipage}
  \caption{Partner Unit Problem example with $\ucap=\iucap=2$ \label{fig:pup-ex}}
\end{figure}

The \textit{model-based} method suggested by \cite{tagesc21a} applies an \textit{instance-specific} tool for identifying symmetries of small but representative instances of a target distribution, and then generalizes respective examples by means of Inductive Logic Programming (ILP).
To lift the symmetries, the method needs to investigate all solutions of the analyzed instances, making it inapplicable if no trivial satisfiable instances exist. In this work, we address such limitations and extend the framework's applicability to combinatorial problems lacking trivial representative instances. 
In particular, our paper makes the following contributions:
\begin{itemize}
  \item We propose a new definition of the ILP learning task and a corresponding implementation for the input generation, which allow the approach to scale with respect to the number of answer sets of the analyzed instances, thus, learning efficient first-order constraints. 
  \item We provide a novel conflict analysis method for the learning system \textsc{ilasp} \citep{larubr20b} that significantly improves the efficiency of the constraint learning.
  \item We present an extensive experimental study conducted on three kinds of PUP benchmarks shows that the new method clearly outperforms the legacy approach in terms of learning and solving performance. 
\end{itemize}

The paper is organized as follows: after a short introduction of preliminaries in Section~\ref{sec:background}, we introduce the revised learning framework and new conflict analysis method in Section~\ref{sec:method}. Results of our experimental study are presented in Section~\ref{sec:experiments}, followed by conclusions and future work in Section~\ref{sec:conclusions}.

\section{Background}\label{sec:background}
In this section, we briefly introduce basic notions of Answer Set Programming, Inductive Logic Programming, and the work by \cite{tagesc21a}, which uses ILP to learn first-order constraints from symmetries of ground ASP programs. 

\subsection{Answer Set Programming}\label{subsec:asp}
Answer Set Programming (ASP) is a declarative programming paradigm based on non-monotonic reasoning and the stable model semantics \citep{gellif91a}. 
Over the past decades, ASP has attracted considerable interest thanks to its elegant syntax, expressiveness, and efficient system implementations, 
successfully adopted in numerous domains like, e.g., configuration, robotics, or biomedical applications \citep{ergele16a,fafrsctate18a}. 
We briefly present the syntax and semantics of ASP, and refer the reader to textbooks
\citep{gekakasc12a,lifschitz19a} for more in-depth 
introductions.

\paragraph{Syntax.} 
An ASP program $P$ is a set of \textit{rules}~$r$ of the form: 
\begin{equation*}
  a_0 \gets a_1, \dots, a_m, \dnot a_{m+1}, \dots, \dnot a_n
\end{equation*}
where $\mathit{not}$ stands for \textit{default negation} and $a_i$, for $0\leq i \leq n$, are atoms. 
An \textit{atom} is an expression of the form $p(\overline{t})$, where $p$ is a predicate, $\overline{t}$ is a possibly empty vector of terms, and the predicate $\bot$ (with an empty vector of terms) represents the constant \textit{false}.
Each \textit{term} $t$ in $\overline{t}$ is either a variable or a constant.
A \textit{literal} $l$ is 
an atom $a_i$ (positive) or its negation $\dnot a_i$ (negative). 
The atom $a_0$ is the \textit{head} of a rule~$r$,
denoted by $H(r)=a_0$,
and the \textit{body} of~$r$ includes the positive or negative, respectively,
body atoms $B^+(r) = \{a_1, \dots, a_m\}$ and $B^-(r) = \{a_{m+1}, \dots, a_n\}$.
%
A rule~$r$ is called a \textit{fact} if $B^+(r)\cup B^-(r)=\emptyset$, and a \textit{constraint} if $H(r) = \bot$.

\paragraph{Semantics.}
The semantics of an ASP program $P$ is given in terms of its \textit{ground instantiation} $P_{\mathit{grd}}$, which is obtained by
replacing each rule $r\in\nolinebreak P$ with its instances obtained by substituting the variables in $r$ by constants occurring in $P$.
Then, an \textit{interpretation} $\mathcal{I}$ is a set of (\textit{true}) ground atoms occurring in $P_{\mathit{grd}}$ that does not contain~$\bot$. 
An interpretation $\mathcal{I}$ \textit{satisfies} a rule $r \in P_{\mathit{grd}}$ if $B^+(r) \subseteq \mathcal{I}$ and $B^-(r) \cap \mathcal{I}=\emptyset$ imply $H(r) \in \mathcal{I}$, and
$\mathcal{I}$ is a \textit{model} of $P$ if it satisfies all rules $r\in P_{\mathit{grd}}$. 
A model $\mathcal{I}$ of~$P$ is \textit{stable} if it is a subset-minimal model of the reduct $\{H(r) \gets B^+(r) \mid r \in P_{\mathit{grd}}, B^-(r) \cap \mathcal{I} = \emptyset\}$, and we denote the set of all stable models, also called answer sets, of $P$ by $\mathit{AS}(P)$.

\subsection{Inductive Logic Programming}\label{subsec:ilp}
Inductive Logic Programming (ILP) is a form of machine learning whose goal is to learn a logic program that explains a set of observations in the context of some pre-existing knowledge \citep{crdumu20a}.
Since its foundation, the majority of research in the field has addressed Prolog semantics 
although applications in other paradigms appeared in the last years.  
The most expressive ILP system for ASP is \textit{Inductive Learning of Answer Set Programs} (\textsc{ilasp}), which can solve a variety of ILP tasks \citep{larubr14a,ilasp}.

\paragraph{Learning from Answer Sets.}
A \textit{learning task} for \textsc{ilasp} is given by a triple $\langle B,E,H_M \rangle$, where an ASP program $B$ defines the \textit{background knowledge}, the set $E$ comprises two disjoint subsets $E^+$ and $E^-$ of \textit{positive} and \textit{negative examples}, and the \textit{hypothesis space} $H_M$ is defined by a language bias $M$, which limits the potentially learnable rules \citep{larubr14a}.
%
Each example $e \in E$ is a pair $\langle e_{\mathit{pi}}, C\rangle$ called \textit{Context Dependent Partial Interpretation} (CDPI), where 
\begin{enumerate*}[label=\emph{(\roman*)}]
  \item $e_{\mathit{pi}}$ is a \textit{Partial Interpretation} (PI) defined as pair of sets of atoms $\langle T, F\rangle$, called \textit{inclusions} ($T$) and  \textit{exclusions} ($F$), respectively, and
  \item $C$ is an ASP program defining the \textit{context} of PI $e_{\mathit{pi}}$.
\end{enumerate*}
Given a (total) interpretation $\mathcal{I}$ of a program $P$ and a PI $e_{\mathit{pi}}$, we say that $\mathcal{I}$ \textit{extends} $e_{\mathit{pi}}$ if $ T \subseteq \mathcal{I}$ and $F \cap \mathcal{I} = \emptyset$.
Given an ASP program $P$, an interpretation $\mathcal{I}$, and a CDPI $e=\langle e_{\mathit{pi}}, C\rangle$, we say that  $\mathcal{I}$ is an \textit{accepting answer set} of $e$ with respect to $P$ if $\mathcal{I} \in \mathit{AS}(P \cup C)$ such that $\mathcal{I}$ extends $e_{\mathit{pi}}$.

Each hypothesis $H \subseteq H_M$ learned by \textsc{ilasp} must respect the following criteria: 
\begin{enumerate*}[label=\emph{(\roman*)}]
  \item for each positive example $e \in E^+$, there is some accepting answer set of $e$ with respect to $B\cup H$; and 
  \item for any negative example $e \in E^-$, there is no accepting answer set of $e$ with respect to $B\cup H$.
\end{enumerate*} 
If multiple hypotheses satisfy the conditions, the system returns one of those with the lowest cost. By default, the cost $c_{r}$ of each rule $r \in H_M$ corresponds to its number of literals \citep{larubr14a}; however, the user can define a custom scoring function for defining the rule costs.
\cite{larubr18b} extend the expressiveness of \textsc{ilasp} by allowing noisy examples.
With this setting, if an example $e$ is not covered, i.e., there is an accepting answer set for $e$ if it is negative, or none if $e$ is positive, the corresponding weight 
is counted as a penalty. If no dedicated weight is specified, the example's weight is infinite, thus forcing the system to cover the example. Therefore, the learning task becomes an optimization problem with two goals: minimize the cost of $H$ and minimize the total penalties for the uncovered examples.

The \textit{language bias} $M$ for the \textsc{ilasp} learning task is specified using \textit{mode declarations}. Constraint learning, i.e., when the search space exclusively consists of rules $r$ with $H(r) = \bot$, requires only mode declarations for the body of a rule:
\texttt{\#modeb$($R,P,$($E$))$.} 
In this definition, the optional element \texttt{R} is a positive integer, called \textit{recall}, which sets the upper bound on the number of mode declaration applications in each rule.
\texttt{P} is a ground atom whose arguments are placeholders of type \texttt{var(t)} for some constant term \texttt{t}. 
In the learned rules, the placeholders will be replaced by variables of type~\texttt{t}. For each rule, there are at most $V_{\mathit{max}}$ variables and $B_{\mathit{max}}$ literals in the body, which are both equal to $3$ by default.
Finally, \texttt{E} is an optional modifier that restricts the hypothesis space further, limited in our paper to the \texttt{anti\_reflexive} and \texttt{symmetric} options that both work with predicates of arity~$2$. When using the former, the atoms of the predicate \texttt{P} should be generated with two distinguished argument values, while
rules generated with the latter take into account that the predicate \texttt{P} is symmetric.

In a constraint learning task, just as in other ILP applications \citep{cropdum20a}, the language bias must be defined manually for each ASP program $P$. A careful selection of the bias is essential since a too weak bias might not provide enough limitations for a learner to converge. In contrast, a too strong bias may exclude solutions from the search space, thus resulting in sub-optimal learned constraints. 

\paragraph{Conflict Driven Inductive Logic Programming (CDILP).} \label{subsec:cdilp}
Several \textsc{ilasp} releases have been developed in the last years, extending its learning expressiveness and applying more efficient search techniques \citep{larubr20b}.
%
Recently, \cite{law21a} introduced CDILP -- a new search approach that overcomes the limitation of previous \textsc{ilasp} versions regarding scalability with respect to the number of examples and further aims at efficiently addressing tasks with noisy examples. 
%
The approach exploits a set $\mathit{CC}$ of \textit{coverage constraints}, each defined by a pair $\langle e, F \rangle$, where $e \in E$ is a CDPI and $F$ is a propositional formula 
over identifiers for the rules in $H_M$. The formula is defined such that, for any $H \subseteq H_M$, if $H$ does not respect $F$, 
then $H$ does not cover $e$. 
CDILP interleaves the search for an optimal hypothesis $H$ (for the current
$\mathit{CC}$) with a ``conflict analysis'' phase. In this phase,
\textsc{ilasp} identifies (at least) one example $e$ not covered by $H$, which
was not determined by the current $\mathit{CC}$; then, it creates a new
conflict for $e$ and adds it to $\mathit{CC}$. If such an example does not
exist, \textsc{ilasp} returns the current $H$ as an optimal solution.
Otherwise, the system repeats the procedure with the updated set of conflicts. 
Using the Python interface, PyLASP, one can apply different conflict analysis
methods as long as they are proven to be \textit{valid}, i.e., a method must
terminate and compute formulas $F$ such that the current hypothesis $H$ does
not respect them. This requirement guarantees that the CDILP procedure
terminates and returns an optimal hypothesis for the learning task.

There are currently three built-in methods for conflict analysis in
\textsc{ilasp}, denoted by $\alpha$, $\beta$, and $\gamma$, each of which determines a
coverage constraint for an example $e$ that is not covered by a given hypothesis~$H$.
In the most stringent case, $\gamma$, \textsc{ilasp} computes a coverage constraint
that is satisfied by exactly those hypotheses that cover $e$. Identifying such a
comprehensive coverage formula has the advantage that any example will be
analyzed in at most one iteration, so that \textsc{ilasp} with $\gamma$ for
conflict analysis usually requires a small number of iterations only. On the other
hand, computing such a precise coverage formula is complex, meaning that an
iteration can take long time. For this reason, $\alpha$ and $\beta$ were
introduced. Both methods yield smaller formulas that can be computed in less
time. While this can lead to more iterations of CDILP, 
in some
domains, the overall runtime benefits from significantly shorter iterations.
However, our preliminary investigations showed that, for highly combinatorial PUP instances, even $\alpha$ and $\beta$ struggle to compute coverage constraints in acceptable time.
In this work, we thus introduce a
new conflict analysis method that brings significant improvements over 
$\alpha$, $\beta$, and $\gamma$ on PUP instances.



\subsection{Lifting SBCs for ASP}\label{subsec:framework}

\cite{tagesc21a} presented an approach to lift ground 
SBCs for ASP programs using ILP. Their system takes four kinds of inputs:
\begin{enumerate*}[label=\emph{(\roman*)}]
  \item an ASP program $P$ modeling a combinatorial problem; 
  \item two sets $S$ and $\mathit{Gen}$ of small satisfiable instances representative for a practical problem solved using $P$; 
  \item the hypothesis space $H_M$; and 
  \item the Active Background Knowledge $\mathit{ABK}$ as an ASP program 
comprising auxiliary predicate definitions and constraints learned so far.
\end{enumerate*} 
The \textit{generalization set} $\mathit{Gen}$ contains instances 
used to generate positive examples 
that the set of learned constraints must preserve. 
As a result, we increase the likelihood for the learned constraints to generalize beyond the training examples.
The instances of the \textit{training set} $S$ are passed to the instance-specific symmetry breaking system \textsc{sbass} \citep{drtiwa11a} to identify the symmetries of each instance in~$S$.
The output of \textsc{sbass}, $\Pi$, is a set of permutation group \textit{generators} (also called \textit{permutations}) subsuming groups of symmetric answer sets for the analyzed ground program. 
The framework by \cite{tagesc21a} uses this information to define the 
positive and negative examples for an ILP task and applies \textsc{ilasp} to solve it. 
The negative examples are associated with a weight, as the system aims at 
constraints that remove as many symmetric answer sets as possible but does not require eliminating all of them.


In their subsequent work, \cite{tagesc22} discuss four 
approaches for creating training examples with \textsc{sbass} for ground programs $P_i$, obtained by grounding $P$ with the instances $i \in S$. 
In particular, \textit{enum} enumerates all answer sets of $P_i$ and classifies each solution as a positive or negative example, according to the 
common lex-leader approach.
That is, if an answer set $\mathcal{I}$ can be mapped to a lexicographically smaller, symmetric answer set using the permutations $\Pi$, i.e., if $\mathcal{I}$ is dominated, it will produce a negative example. Otherwise, $\mathcal{I}$ yields a positive example.
In both cases, the inclusions are $\mathcal{I} \cap atoms(\Pi)$, 
where $atoms(\Pi)$ denotes the set of atoms occurring in~$\Pi$,
the exclusions are $atoms(\Pi) \setminus \mathcal{I}$, and the context is $i$.
%
%
%
On the other hand, the \textit{fullSBCs} approach exploits the \textsc{clingo} API to interleave the solving phase, which returns a candidate answer set $\mathcal{I}$, with the analysis of all its symmetric solutions. Thanks to the properties of permutation groups \citep{sakallah09a}, 
\textit{fullSBCs} can determine all symmetric answer sets by repeatedly applying the 
permutations~$\Pi$ 
to $\mathcal{I}$ until no new solutions can be obtained. 
This approach leads to 
a partition of the answer sets for an 
instance, where every partition cell consists of symmetric solutions.
For each obtained cell, the system labels the smallest answer set as a positive example and all 
remaining ones as negative examples, 
thus achieving full rather than partial symmetry breaking, while 
the \textit{enum} approach yields the latter only.

\cite{tagesc22} evaluate the performance of their methods on three versions of the pigeon-hole problem and the house-configuration problem
\citep{DBLP:conf/confws/FriedrichRFHSS11}. 
The $\mathit{ABK}$ they use contains predicates emulating arithmetic built-ins, and the search space is split to apply the learning framework iteratively and thus increase the learning efficiency. 
Given that the considered problem instances are defined in terms of unary predicates and those in the training set~$S$ have a small number of solutions (from about a dozen up to a few hundred), the suggested formulation of the ILP task admits a fast learning of first-order constraints speeding up the solving of (unsatisfiable) instances.
However, instances of complex application problems lack the presupposed  characteristics, rendering the previously proposed approaches inapplicable
and calling for a more scalable handling of training instances.


\section{Method}\label{sec:method}

This section presents an alternative version of the framework introduced by \cite{tagesc21a}, extending its applicability. 
First, we propose a revised ILP learning task and procedures necessary to define inputs of this task for difficult combinatorial problems, as illustrated in Fig.~\ref{fig:pipeline}.
Then, we describe a new conflict analysis method for the \textsc{ilasp} system enabling efficient constraint learning to handle this revised task.

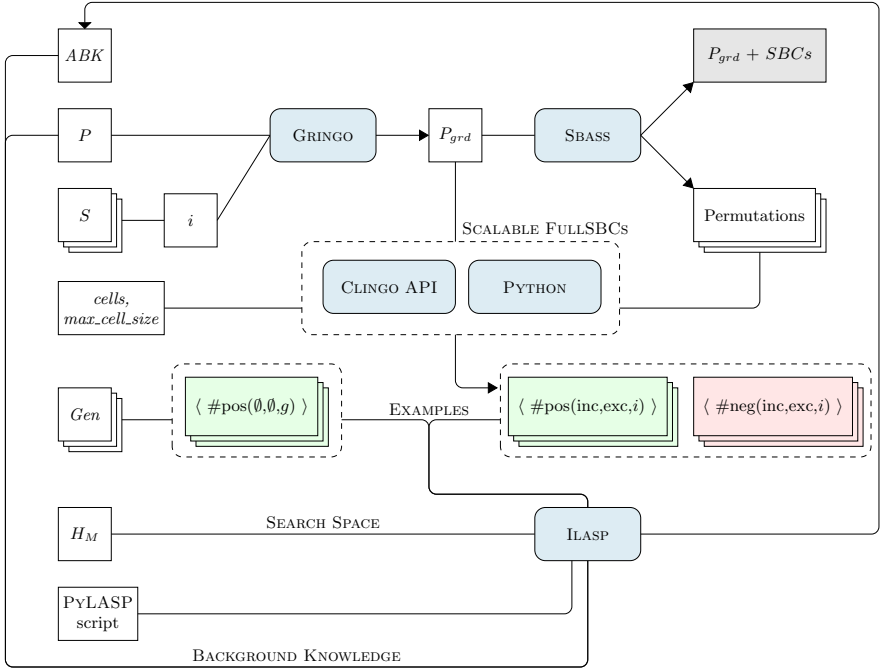
\begin{figure}[t]\hspace*{-1.25cm}
  {\small
\scalebox{0.7}{\begin{tikzpicture}[>=triangle 60]

 \draw  [draw=white] (-8.5,1) rectangle ++(1,1) ;
 \draw [draw=black]  (-6,2.5)  rectangle ++(1,1) node  [midway,text width=3.4cm,align=center] {$\mathit{ABK}$};
\draw[-,rounded corners] (-6,3)  --  (-7,3) -- (-7,-8.5) -- (4,-8.5) -- (4,-6.5);

 \draw [draw=black]  (-6,1)  rectangle ++(1,1) node  [midway,text width=3.4cm,align=center] {$P$};
 \draw[- ] (-5,1.5) -- (-2,1.5);
 \draw[-,rounded corners] (-6,1.5)  --  (-7,1.5) -- (-7,-8.5) --  node [above,midway,align=center] { \textsc{Background Knowledge}} (4,-8.5) -- (4,-6.5);

   \stackk{3}{ (-6,-0.5)}{(1,1)}{$S$};
   \draw[- ] (-4.8,-0.1) -- (-4,-0.1);

 \draw[-,rounded corners] (-5,-1.75)  --  (-1,-1.75);
 \draw  [draw=black,fill=white]  (-6,-2.25)  rectangle ++(2,1) node  [midway,text width=2cm,align=center] {\textit{cells, max\_cell\_size}};

 \stackk{3}{ (-6,-4.25)}{(1,1)}{$\mathit{Gen}$};
 \draw[-,rounded corners] (-4.8,-3.85)  --  (1,-3.85) --  (1,-5.25) --  (4,-5.25) --  (4,-5.5);


   \draw  [draw=black]  (-6,-6.5)  rectangle ++(1,1) node  [midway,text width=3.4cm,align=center] {$H_M$};
   \draw[-,rounded corners] (-5,-6)  -- node [above,midway,align=center] { \textsc{Search Space}} (3,-6);

 \draw[-,rounded corners] (-5,-7.5)  -- (3.7,-7.5)  -- (3.7,-5.5);
 \draw  [draw=black,fill=white]  (-6,-8)  rectangle ++(1.5,1) node  [midway,text width=1.4cm,align=center] {\textsc{PyLASP} script};

 \draw [draw=black] (-4,-.6) rectangle ++(1,1) node  [midway,text width=4cm,align=center] {$i$};
 \draw[- ] (-3,-0.1) -- (-2,1.5);
 
 \draw [rounded corners=0.2cm,fill=aauBlue!20] (-2,1) rectangle ++(2,1) node [midway,text width=3.4cm,align=center] {\textsc{Gringo}};
 \draw[-> ] (0,1.5) -- (1,1.5);
 
 \draw [draw=black] (1,1) rectangle ++(1,1) node  [midway,text width=4cm,align=center] {$P_{grd}$};
 \draw[- ] (2,1.5) -- (3,1.5);
 \draw[->, rounded corners] (1.5,1) --  (1.5,-3.25) --  (2.3,-3.25) ;
 
 \draw [rounded corners=0.2cm,fill=aauBlue!20] (3,1) rectangle ++(2,1) node [midway,text width=3.4cm,align=center] {\textsc{Sbass}};
 \draw[-> ] (5,1.5) -- (6,0.5);
 \draw[-> ] (5,1.5) -- (6,2.5);
 
 \draw [draw=black,fill=black!10]  (6,2.5)  rectangle ++(2.5,1) node  [midway,text width=3.4cm,align=center] {$P_{grd}$ + $SBCs$};
 
     \draw[-,rounded corners] (7.25,-0.5)  -- (7.25,-1.75)  --  (4,-1.75);
 \stackk{3}{ (6,-0.5)}{(2.3,1)}{Permutations};
 
 \draw [rounded corners=0.2cm,inner sep=0pt,dashed,fill=white!20] (-1.4,-2.25) rectangle ++(6,1.75)
 node at (3.2,-0.25)  {\textsc{Scalable FullSBCs}};
 \draw [rounded corners=0.2cm,fill=aauBlue!20] (-1,-1.85) rectangle ++(2.5,1) node [midway,text width=3.4cm,align=center] {\textsc{Clingo API}};
  \draw [rounded corners=0.2cm,fill=aauBlue!20] (1.75,-1.85) rectangle ++(2.5,1) node [midway,text width=3.4cm,align=center] {\textsc{Python}};

 


 \draw[-,rounded corners] (2.75,-3.85)  --  (1,-3.85) --  (1,-5.25) --  (4,-5.25) --  (4,-5.5);
 \draw [rounded corners=0.2cm,inner sep=0pt,dashed,fill=white!20] (2.35,-4.55) rectangle ++(7,1.75);
 \stack{3}{ (6,-4.15)}{(3,1.1)}{$\langle $ \#neg(inc,exc,$i$) $\rangle$}{red!10}{black}; 
 \stack{3}{ (2.5,-4.15)}{(3,1.1)}{$\langle $ \#pos(inc,exc,$i$) $\rangle$}{green!10}{black};

 \draw [rounded corners=0.2cm,inner sep=0pt,dashed,fill=white!20] (-3.85,-4.55) rectangle ++(3.2,1.75) ;
 \stack{3}{ (-3.6,-4.15)}{(2.5,1.1)}{$\langle$ \#pos($\emptyset$,$\emptyset$,$g$) $\rangle$}{green!10}{black}; 

   \draw [rounded corners=0.2cm,fill=aauBlue!20] (3,-6.5) rectangle ++(2,1) node [midway,text width=3.4cm,align=center] {\textsc{Ilasp}};
   \draw[black,rounded corners, arrows={-Triangle[angle=90:5pt,black,fill=black]}] (5,-6) -- (9.5,-6) -- (9.5,4) -- (-5.5,4) -- (-5.5,3.5);
   \node[align=center] at (1,-3.65){ \textsc{Examples}};

 \end{tikzpicture}
 }}
 \caption{Revised learning framework implementation}
 \label{fig:pipeline}
 \end{figure}

\subsection{Revised ILP Task}\label{subsec:ilpTask}
\paragraph{Training Examples.}
\cite{tagesc21a} propose approaches to the example generation for an ILP learning task, which yield a number of examples proportional to the number of solutions for each problem instance in $S$. However, for difficult problems with many symmetries, even the simplest instances might yield a large number of answer sets. Their enumeration might thus take unacceptably long time. Moreover, even if the enumeration succeeds, the number of obtained examples is often too large to be handled by \textsc{ilasp}.
To overcome these issues, we propose two scalable approaches to the generation of examples for each instance in $S$ based on the \textit{enum} and \textit{fullSBCs} strategies by \cite{tagesc21a}.
The \textbf{scalable enum} strategy generates examples from a portion of solutions, which are sampled from at most $n$ random answer sets.
For each candidate solution, the lex-leader criterion is applied to determine whether another symmetric answer set dominates it.
This approach does not guarantee a fixed ratio between positive and negative examples, and it might fail to identify symmetric answer sets, as inspecting single applications of permutation group generators does not achieve full symmetry breaking \citep{sakallah09a}. Nevertheless, the fixed number of considered answer sets provides means to limit the time required for the definition of a learning task. 
The \textbf{scalable fullSBCs} approach 
creates a set of examples configured by two parameters: 
\begin{enumerate*}[label=\emph{(\roman*)}]
  \item \textit{cells} defines the number of cells of symmetric solutions to analyze; and 
  \item \textit{max\_cell\_size} limits the maximal number of negative examples generated per cell.
\end{enumerate*}
For each cell, the approach 
adds the first \textit{max\_cell\_size} symmetric answer sets as negative examples. Next, it explores the whole cell of symmetric solutions and takes the smallest one as a positive example. 
%
As a result, this method generates a controlled number of positive and negative examples, regardless of how many solutions there may be for a given instance. In fact, at most \textit{cells} many positive and
$\text{\textit{cells}} \times \text{\textit{max\_cell\_size}}$ many negative examples can be obtained in total.%
%
%

\paragraph{Background Knowledge.}
Learning SBCs that improve the grounding and solving efficiency is crucial for difficult combinatorial problems.
%
%
Therefore, the background knowledge $\mathit{ABK}$ should provide necessary auxiliary predicates that allow an ILP system to incorporate such constraints in the search space.
Previous formulations of $\mathit{ABK}$ for learning SBCs have issues with expressing appropriate constraints since the provided predicates do not take the structure of problem instances into account.
%
%
In this paper, we propose a new version of $\mathit{ABK}$ comprising two new types of auxiliary predicates. 
The first type encodes local properties of nodes in an input graph, while the second enables a more efficient constraint representation. That is, for 
two different nodes \texttt{N} and \texttt{M}, the predicate \texttt{close(N,M)} holds if these nodes share a common neighbor.
In case of bipartite graphs containing two types of nodes, \texttt{A} and \texttt{B}
(standing for sensors and zones in case of PUP instances),
we define two versions of this auxiliary predicate, distinguishing the two types by \texttt{closeA/2} and \texttt{closeB/2}. 
The second auxiliary predicate introduces an ordering on value assignments as follows: 
let $[1..\mathit{max}_x]$ and $[1..\mathit{max}_y]$ be the domains of two variables \texttt{X} and \texttt{Y}, 
and \texttt{p(X,Y)} be a binary predicate that holds
for at most one value $\text{\texttt{X}} \in [1..\mathit{max}_x]$
per $\text{\texttt{Y}} \in [1..\mathit{max}_y]$ in each answer set. 
%
Then, we define the following auxiliary predicate:
\begin{lstlisting}
  pGEQ(X,Y) :- p(X,Y).
  pGEQ(X,Y) :- pGEQ(X+1,Y), 0 < X.
\end{lstlisting}
If \texttt{pGEQ(X,Y)} is true, we know that \texttt{p(X',Y)} holds for some value \texttt{X'} equal to \texttt{X} or greater.
A constraint may then contain \texttt{pGEQ(Y,Y)} instead of the equivalent test \texttt{p(X,Y),~Y <=~X}, thus reducing the ground instantiation size. Moreover, this encoding can bring benefits for solving as well, as we obtain a more powerful propagation \citep{crabak94a}.

From a technical perspective, the framework presented by \cite{tagesc21a} runs \textsc{sbass} on a ground program resulting from the union of $P$, an instance $i \in S$, and $\mathit{ABK}$.
The inclusion of $\mathit{ABK}$ caused no difference in the symmetries for their approach (without iterative constraint learning),
given that the introduced auxiliary predicates do not affect atoms occurring in $P$. 
On the other hand, the predicate \texttt{pGEQ/2} alters the identification of symmetries for atoms over the predicate \texttt{p/2} contained in $P$.
Hence, we do not necessitate $\mathit{ABK}$ to contribute to a ground program passed to \textsc{sbass}, as indicated in Fig.~\ref{fig:pipeline}.
	
\paragraph{Language Bias.}
To address difficult combinatorial problems, we suggest the following set of mode declarations to define the search space $H_M$:
\begin{lstlisting}
  #modeb(1,r(var(t),var(t))).
  #modeb(1,close(var(t),var(t)),(symmetric,anti_reflexive)).
  #modeb(2,pGEQ(var(t),var(t))).
  #modeb(1,q(var(t),var(t))).
\end{lstlisting}
Assuming that the predicate \texttt{r/2} specifies the graph provided by an instance, the mode declaration in the first line expresses that one
such atom can occur per constraint.
Then, for each \texttt{close/2} or \texttt{pGEQ/2} predicate in $\mathit{ABK}$, we include a respective mode declaration as in the second and third lines.
Moreover, if atoms over another predicate \texttt{q/2} occur in~
$P$, but not in $\mathit{ABK}$, 
we supply a mode declaration of the last kind,
where considering binary predicates is sufficient for PUP instances.
Let us notice that only one type \texttt{t} of variables is used for all mode declarations. In this way, there is no restriction on the variable replacements, and we may explore inherent yet hidden properties of the input labels. 
Furthermore, we introduce an alternative scoring function assigning the cost $c_{\mathit{id}}$ to each rule $r_{\mathit{id}} \in H_M$. 
For every literal obtainable from 
the mode declarations, we overwrite its default cost~$1$ with a custom cost, except for literals over domain predicates like, e.g., \texttt{r/2}.
For literals over other predicates, in case the same variable is used for both of the contained arguments, the cost is $2$, and $3$ otherwise.
For example, the cost of the constraint
\texttt{:- pGEQ(V1,V1), close(V1,V2), q(V2,V3).}
would be equal to $1+1+1=3$ with the default scoring function but $2+1+3=6$ with our custom costs.
\subsection{Conflict Analysis}

As discussed in Section~\ref{subsec:cdilp}, \textsc{ilasp}'s Conflict Driven ILP (CDILP) approach requires a \textit{conflict analysis} method that, given a hypothesis $H$ and an example $e$ that $H$ does not cover, returns a coverage formula $F$.
For \textsc{ilasp} to function correctly, this formula must 
\begin{enumerate*}[label=\emph{(\roman*)}]
  \item not be satisfied by $H$, and 
  \item be satisfied by every hypothesis $H'\subseteq H_M$ 
covering $e$.
\end{enumerate*}

\textsc{Ilasp} has several built-in methods for conflict analysis, some of which are
described by \cite{law21a}. However, for learning tasks with hypothesis spaces
that consist of constraints only, these methods all behave equivalently when processing
positive examples. For positive examples $e=\langle e_{\mathit{pi}}, C\rangle$ such that
$B\cup C$ has many answer sets, the built-in methods can return extremely
long coverage formulas that also take long time to compute. For this reason, we
define a new conflict analysis method leading to shorter coverage
constraints that can be computed much faster.

\begin{definition}
  Let $T = \langle B, E, H_M\rangle$ be a learning task such that $H_M$
  consists of constraints, and let $e \in E^+$ be a positive example. 
  For any
  $H \subseteq H_M$ that does not cover~$e$, the
  \textit{subsumption-based conflict analysis} method $\mathit{sbca}(e, H, T)$
  returns the formula $\bigvee\limits_{r\in H,r'\in H_M}\bigwedge\limits_{r'
  \subseteq_{\theta} r} \lnot r'_{\mathit{id}}$,
  where $r'\subseteq_{\theta} r$ denotes that $r'$ subsumes~$r$.
\end{definition}

\begin{example}

  Consider a scenario such that \textsc{ilasp} is run on a task $T$ with the language
  bias given in the previous subsection. At some point in the execution, $T$
  may generate the hypothesis $H = \lbrace \texttt{:- close(V1,V2).}\;\;
  \texttt{:- not pGEQ(V1,V1), q(V1,V1).}\rbrace$. Within the hypothesis space
  computed by \textsc{ilasp},\footnote{This space is smaller than the full hypothesis
  space as isomorphic rules are discarded. For instance, \texttt{:- q(V2, V2).}
  is isomorphic to \texttt{:- q(V1, V1).} and thus not considered by ILASP.} the
  first rule is only subsumed by itself, and the second rule is
  subsumed by the following rules:

  {\small
  \begin{verbatim}
:- q(V1,V1).                      :- not pGEQ(V1,V1), q(V1,V1).
:- q(V1,V2).                      :- not pGEQ(V1,V1), not pGEQ(V1,V2), q(V1,V2).
:- pGEQ(V1,V1).                   :- not pGEQ(V1,V1), not pGEQ(V2,V1), q(V1,V2).
:- pGEQ(V1,V2).                   :- not pGEQ(V1,V1), not pGEQ(V2,V2), q(V1,V2).
:- not pGEQ(V1,V1), q(V1,V2).     :- not pGEQ(V1,V2), not pGEQ(V2,V1), q(V1,V2).
:- not pGEQ(V1,V2), q(V1,V2).     :- not pGEQ(V1,V2), not pGEQ(V2,V2), q(V1,V2).
:- not pGEQ(V2,V1), q(V1,V2).     :- not pGEQ(V2,V1), not pGEQ(V2,V2), q(V1,V2).
:- not pGEQ(V2,V2), q(V1,V2).
  \end{verbatim}
  }

  \noindent
  Let $r^1$ be the first rule in~$H$ and $r^2,\ldots, r^{16}$ be the rules that
  subsume the second rule. In this case, for any positive example $e$
  that is not covered by $H$, we obtain the coverage constraint $\mathit{sbca}(e, H, T) = (\lnot r^1_{\mathit{id}}) \lor
  ((\lnot r^2_{\mathit{id}})\land\ldots\land (\lnot r^{16}_{\mathit{id}}))$.
\end{example}

The following theorem shows that $\mathit{sbca}$ is a valid method for conflict
analysis for positive examples, provided that the hypothesis space contains
constraints only. This means that in our application domain, when using 
the $\mathit{sbca}$ method for
some or all of the positive examples, \textsc{ilasp} is guaranteed to return an
optimal solution for any learning task.

\begin{theorem}
  Let $T = \langle B, E, H_M\rangle$ be a learning task such that $H_M$
  consists of constraints, and let $e \in E^+$ be a positive example. 
  For any
  $H\subseteq H_M$ that does not cover $e$:
  \begin{enumerate}[leftmargin=*,align=left]
    \item
      $H$ does not satisfy $\mathit{sbca}(e, H, T)$.
    \item
      Every $H'\subseteq H_M$ that covers $e$ satisfies $\mathit{sbca}(e, H, T)$.
  \end{enumerate}
\end{theorem}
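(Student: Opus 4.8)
The plan is to reduce both parts to one structural lemma about $\theta$-subsumption of constraints together with the standard fact that augmenting an ASP program with constraints only filters out answer sets. First I would record the following: if $r'\subseteq_{\theta}r$ for constraints $r',r\in H_M$ and $\mathcal{I}$ is an interpretation that \emph{violates} $r$ (i.e.\ some ground instance $r\sigma$ satisfies $B^+(r\sigma)\subseteq\mathcal{I}$ and $B^-(r\sigma)\cap\mathcal{I}=\emptyset$), then $\mathcal{I}$ also violates $r'$. This holds because a witnessing $\theta$ gives $B^+(r')\theta\subseteq B^+(r)$ and $B^-(r')\theta\subseteq B^-(r)$ (subsumption preserves the sign of body literals), so $r'\theta\sigma$ is a legitimate ground instance of $r'$ — every variable of $r'$ is sent by $\theta\sigma$ to a term already occurring in the body of $r$, hence grounded by $\sigma$ — and $B^+(r'\theta\sigma)\subseteq B^+(r\sigma)\subseteq\mathcal{I}$ while $B^-(r'\theta\sigma)\subseteq B^-(r\sigma)$ is disjoint from $\mathcal{I}$. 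I would also note two trivial facts: $\subseteq_{\theta}$ is reflexive (identity substitution), so $r\subseteq_{\theta}r$ for every $r$; and for any program $Q$ and set $D$ of constraints, $\mathit{AS}(Q\cup D)=\{\mathcal{I}\in\mathit{AS}(Q)\mid \mathcal{I}\text{ violates no rule of }D\}$.

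For part~1, observe that for each $r\in H$ the inner conjunction $\bigwedge_{r'\subseteq_{\theta}r}\lnot r'_{\mathit{id}}$ contains the conjunct $\lnot r_{\mathit{id}}$ by reflexivity of $\subseteq_{\theta}$, and since $r\in H$ this conjunct is false under the assignment induced by $H$. Hence every disjunct of $\mathit{sbca}(e,H,T)$ is false under $H$, so $H$ does not satisfy the formula (the case $H=\emptyset$ is consistent, since the empty disjunction is unsatisfiable).

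For part~2, let $H'\subseteq H_M$ cover $e=\langle e_{\mathit{pi}},C\rangle$, so there is $\mathcal{I}\in\mathit{AS}(B\cup H'\cup C)$ extending $e_{\mathit{pi}}$; in particular $\mathcal{I}\in\mathit{AS}(B\cup C)$ and $\mathcal{I}$ violates no rule of $H'$. Since $H$ does not cover $e$ while $\mathcal{I}\in\mathit{AS}(B\cup C)$ extends $e_{\mathit{pi}}$, we cannot have $\mathcal{I}\in\mathit{AS}(B\cup H\cup C)$; by the filtering fact, $\mathcal{I}$ must therefore violate some $r\in H$. I then claim the disjunct associated with this $r$ is satisfied by $H'$, i.e.\ $r'\notin H'$ for every $r'\in H_M$ with $r'\subseteq_{\theta}r$: by the lemma any such $r'$ is violated by $\mathcal{I}$, so if $r'\in H'$ then $\mathcal{I}\notin\mathit{AS}(B\cup H'\cup C)$, a contradiction. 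Hence $H'$ satisfies that disjunct and thus the whole formula.

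I expect the only real work to be the subsumption lemma, specifically verifying that $r'\theta\sigma$ is a genuine ground instance of $r'$ (the variable-tracking step) and that the literal signs are preserved by $\subseteq_{\theta}$; the rest is bookkeeping. The other point to be careful about is using the definitions exactly as given — "$H$ covers $e$'' for a positive example meaning the existence of an accepting answer set of $e$ w.r.t.\ $B\cup H$, and "$H$ satisfies $F$'' meaning $F$ evaluates to true under the assignment $r_{\mathit{id}}\mapsto(r\in H)$ — and checking that $\mathit{sbca}(e,H,T)$ only mentions identifiers of rules in $H_M$, which holds since the inner conjunction ranges over $r'\in H_M$ and $H\subseteq H_M$.
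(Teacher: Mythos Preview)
Your proposal is correct and follows essentially the same approach as the paper. Part~1 is identical (reflexivity of $\subseteq_{\theta}$ makes each disjunct false under $H$), and part~2 rests on the same two ingredients the paper uses: the subsumption lemma (if $r'\subseteq_{\theta}r$ then any interpretation violating $r$ also violates $r'$) and the constraint-filtering fact $\mathit{AS}(Q\cup D)\subseteq\mathit{AS}(Q)$. The only stylistic difference is that the paper argues part~2 by global contradiction---assuming $H'$ fails every disjunct forces $H'$ to contain, for each $r\in H$, some $r'\subseteq_{\theta}r$, whence $\mathit{AS}(B\cup C\cup H')\subseteq\mathit{AS}(B\cup C\cup H)$ and $H$ would cover $e$---whereas you work directly, exhibiting a specific satisfied disjunct via the rule $r\in H$ violated by the accepting answer set~$\mathcal{I}$; your version is more explicit about the lemmas the paper leaves implicit, but the logical content is the same.
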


\begin{proof}
  \begin{enumerate}[leftmargin=*,align=left]
    \item
      For each $r \in H$, $r\subseteq_{\theta} r$
      implies that 
      $\bigwedge\limits_{r' \subseteq_{\theta} r} \lnot r'_{\mathit{id}}$
      is not satisfied by~$H$.
      Thus, $H$ cannot satisfy any of the disjuncts of
      $\mathit{sbca}(e, H, T)$, i.e., it does not satisfy
      $\mathit{sbca}(e, H, T)$.
    \item
      Assume for contradiction that some $H'\subseteq H_M$ covers $e = \langle
      e_{\mathit{pi}}, C\rangle$ but does not satisfy $\mathit{sbca}(e, H, T)$. Then, by
      the definition of covering $e$, there is some answer set $\mathcal{I} \in \mathit{AS}(B\cup
      C\cup H')$ that extends $e_{\mathit{pi}}$. As $H'$ does not satisfy
      $\mathit{sbca}(e, H, T)$, $H'$ must include constraints that
      subsume each of the constraints in~$H$. This means that
      $\mathit{AS}(B\cup C\cup H') \subseteq \mathit{AS}(B \cup C \cup H)$. Hence, $\mathcal{I} \in AS(B \cup C \cup H)$ contradicts the condition that $H$ does not cover $e$.
  \end{enumerate}
\end{proof}

The advantage of the $\mathit{sbca}$ method for conflict analysis, over the
methods that are already built-in to \textsc{ilasp}, is that this method does
not need to compute answer sets of $B \cup C \cup H$ (for an uncovered example
with context $C$). In other words, it does not need to consider the semantics
of the current hypothesis $H$ and can instead focus on purely syntactic
properties. In combinatorial problem domains, where finding answer sets can
be computationally intensive, our preliminary experiments showed that
$\mathit{sbca}$ is much faster than the existing conflict analysis methods of
\textsc{ilasp}.  On the other hand, the syntactic coverage constraints generated
by $\mathit{sbca}$ tend to be more specific than the semantic constraints
computed by \textsc{ilasp}'s built-in methods. That is, the formulas determined
by $\mathit{sbca}$ apply to fewer hypotheses and cut out fewer solution candidates, which
in turn means that more (yet considerably faster) iterations of the CDILP
procedure are required.

The $\mathit{sbca}$ method is closely related to the ILP system \textsc{Popper}
\citep{popper}, which also identifies syntactically determined constraints based on
subsumption. Specifically, when \textsc{Popper} encounters a hypothesis $H$
entailing an atom that should not be entailed (a negative example for
\textsc{Popper}), all hypotheses subsuming $H$ are discarded (as these would also
entail the atom). Just like $\mathit{sbca}$, since \textsc{Popper} uses
syntactically determined constraints, it can compute these very quickly but
may need many more iterations (compared to the
semantic conflict analysis methods of \textsc{Ilasp}).  While both
approaches are closely linked, \textsc{Popper} learns under Prolog semantics.
Therefore, \textsc{Popper} cannot reason about logic programs with multiple answer sets and
is inapplicable for learning ASP constraints in the combinatorial problem domains we address.

\paragraph{PyLASP Script.}
The $\mathit{sbca}$ method is essential to compute 
coverage formulas for positive examples obtained from the generalization set $\mathit{Gen}$,
including instances with a large number of answer sets,
in acceptable time. However, for 
other examples, the existing conflict analysis methods 
of \textsc{ilasp} are better suited, as the obtained formulas are more informative.
Hence, we extended the default \textsc{PyLASP} script of \textsc{ilasp}
with means to specify examples requiring $\mathit{sbca}$ usage by
dedicated identifiers, associated with instances in $\mathit{Gen}$ 
on which \textsc{clingo} takes more than $5$ seconds for enumerating the answer sets.
%

\section{Experiments}\label{sec:experiments}
For testing our new method, we decided to use 
PUP configuration benchmarks \citep{DBLP:journals/jcss/TeppanFG16}.
We applied our framework to PUP instances supplied by \cite{DBLP:conf/cpaior/AschingerDFGJRT11}, studying the \textit{double}, 
\textit{doublev}, 
and \textit{triple} 
instance collections
with $\ucap=\iucap=2$. Instances of the same type represent buildings of similar topology with scaling parameters that follow a common distribution. Although the benchmark instances are synthetic, they represent a relevant configuration problem concerning safety and security issues in public buildings, like administration offices or museums. In addition, the scalable synthetic benchmarks are easy to generate and analyze. That is, the nodes corresponding to rooms are labeled in a specific order, and the topologies follow a clear pattern.
The PUP instances and further details are provided by \cite{ilpsbc}.

In all three experiments, we learn first-order constraints using the same inputs to the ILP task, except for the two instance sets $S$ and $\mathit{Gen}$: for $S$, we pick the smallest representative instance of the selected type, while $\mathit{Gen}$ comprises the three smallest satisfiable instances other than the one in $S$.
Components of the ILP task shared between the experiments with different kinds of instances include:
\begin{enumerate*}[label=\emph{(\roman*)}]
  \item the input program $P$ as an ASP encoding of PUP that comprises no SBCs and originates from work by \cite{dogalemurisc16a}, where it is referred to as \texttt{ENC1};
  \item the background knowledge $\mathit{ABK}$ defining the auxiliary predicates \texttt{closesensors/2}, \texttt{closezones/2}, \texttt{unit2zoneGEQ/2}, and \texttt{unit2sensorGEQ/2}; and
  \item the language bias $M$ with mode declarations for the auxiliary predicates in $\mathit{ABK}$ as well as the predicates 
  \texttt{zone2sensor/2} and
  \texttt{partnerunits/2}, 
  the latter taking the roles of \texttt{r/2} and \texttt{q/2}
  according to the scheme described in Section~\ref{subsec:ilpTask}.
\end{enumerate*}

For each PUP instance type, we tested the two proposed approaches (\textit{scalable enum} and \textit{scalable fullSBCs}) combined with two versions of the scoring function: 
the default function of \textsc{ilasp}
as well as the custom costs introduced in Section~\ref{subsec:ilpTask}.
With both example generation approaches, the sampling of answer sets was done using the \texttt{--seed <}$seed$\texttt{>} option of \textsc{clingo} (v5.4.0). We executed the experiments using $120$ different random seeds to counterbalance the impact of randomness and get more reliable estimates of the relative learning and solving performance.
For comparing the two approaches, we sampled the same number of examples configured by $n$ for \textit{scalable enum} and $\text{\textit{cells}} \times \text{\textit{max\_cell\_size}}$ for \textit{scalable fullSBCs}. 
We limited the learning time to one hour, interrupting the process in case no positive or negative examples could be obtained from the analysis by \textsc{sbass}.

In preliminary investigations, we tried the default \textsc{PyLASP} script of \textsc{ilasp} (v4.1.2) as well as the one we devised to apply the $\mathit{sbca}$ method for conflict analysis to positive examples from $\mathit{Gen}$. As expected, no run with the default script could be finished within one hour, and we thus focus below on results obtained with our new \textsc{PyLASP} script.

The learning efficiency for all three PUP instance types is such that \textit{scalable fullSBCs} yields more successful ILP tasks, i.e., tasks solved by \textsc{ilasp} within the time limit, than \textit{scalable enum}.%
\footnote{The results reflect the learning setting leading to the fastest runtime, using the alternative ordering \textit{ord} but no \textit{sat} mode \citep{tagesc22}. Detailed records are provided by \cite{ilpsbc}.}
Using the \textit{scalable fullSBCs} strategy and $120$ different random seeds, we were able to finish \textsc{ilasp} runs with $108$ seeds for \textit{double}, $88$ for \textit{doublev}, and $12$ for \textit{triple}. The application of \textit{scalable enum} 
resulted in only $10$, $18$, or $6$ successful runs, respectively, with some of the $120$ seeds.  
%
Note that several \textit{scalable enum} runs had to be canceled because the generated 
ILP tasks were partial, i.e., without either positive or negative examples,
so that trivial optimal hypotheses make learning with \textsc{ilasp} obsolete. 
As \textit{scalable fullSBCs} is the by far more successful example generation strategy,
we restrict the following considerations of solving performance to constraints learned with it.

\begin{figure}[tb]
  \minipage{0.3\textwidth}
    \includegraphics[width=\linewidth]{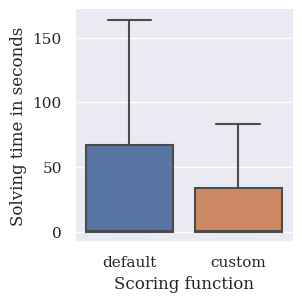}
    {\scriptsize
    \begin{align*} 
    &\text{Avg}: &70.89& &-&  &64.12\\
    &\text{Std}: &120.72& &-& &116.53\\
    &\text{TO}: &320(1568)& &-& &288(1568)
      \end{align*}
       }%
      \centering {\small \emph{(a) Results for PUP double}}
  \endminipage\hfill
  \minipage{0.3\textwidth}
    \includegraphics[width=\linewidth]{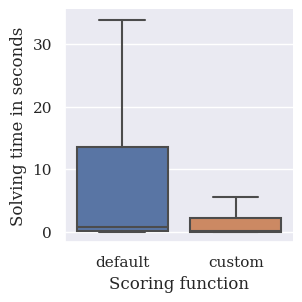}
    {\scriptsize
    \begin{align*} 
    &\text{Avg}: &43.11& &-&  &25.15\\
    &\text{Std}: &94.33& &-& &75.00\\
    &\text{TO}: &97(936)& &-& &55(936)
      \end{align*}
    }%
  \centering {\small \emph{(b) Results for PUP doublev}}
  \endminipage\hfill
  \minipage{0.3\textwidth}%
    \includegraphics[width=\linewidth]{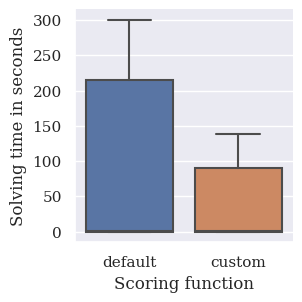}
    {\scriptsize
    \begin{align*} 
    &\text{Avg}: &81.56& &-&  &70.32\\
    &\text{Std}: &129.79& &-& &121.35\\
    &\text{TO}: &16(63)& &-& &13(63)
      \end{align*}
  }%
  \centering {\small \emph{(c) Results for PUP triple}}
  \endminipage
  \caption{Aggregated solving times for constraints learned with the two scoring functions}\label{fig:boxes}
  \end{figure}
Next, we compare the solving performance of \textsc{clingo} relative to
constraints learned with the default scoring function of \textsc{ilasp}
or the custom costs distinguishing domain predicates and variable recurrences as described in Section~\ref{subsec:ilpTask}.
To this end, we consider those of the $120$ random seeds for which
\textsc{ilasp} runs finished successfully with both of the scoring functions, and then provide the learned constraints as background
knowledge to \textsc{clingo} along with a PUP instance.
Each \textsc{clingo} run is limited to $300$ seconds solving time,
and Fig.~\ref{fig:boxes} displays box plots as well as average
runtimes (Avg), standard deviation (Std), and number of timeouts with
the total number of runs in parentheses (TO) for the 
\textit{double}, \textit{doublev}, and \textit{triple} instances.
For all three PUP instance types and especially the hard instances
whose runtime is above average, we observe significant speed-ups
due to constraints learned by means of the custom scoring function.
In fact, the custom costs give preference to first-order constraints
whose ground instances apply and prune the search space more directly,
thus benefiting the solving performance of \textsc{clingo}.
We also applied the Wilcoxon Signed-Rank test \citep{wilcoxon} for non-normally distributed runtimes, which confirms that the observed differences are statistically significant.
%

Finally, we contrast the best-performing learning setting, i.e.,
the \textit{scalable fullSBCs} strategy for example generation along with our custom
scoring function for assigning costs to constraints,
with originally proposed ASP encodings of PUP and instance-specific symmetry breaking.
In detail, Tables \ref{table:double}--\ref{table:triple} show runtime results for
the following systems and encodings:
%
%
%
\begin{enumerate*}[label=\emph{(\roman*)}]
  \item \textsc{clingo} on the plain \texttt{ENC1} encoding;
  \item \textsc{clingo} on \texttt{ENC1} augmented with the most efficient
        learned constraints among those aggregated in Fig.~\ref{fig:boxes} as $\mathit{ABK}$;
  \item \textsc{clingo} on the advanced \texttt{ENC2} encoding \citep{dogalemurisc16a},
        incorporating hand-crafted static symmetry breaking as well as an ordered
        representation \citep{crabak94a} of assigned units similar to
        \texttt{pGEQ/2} in Section~\ref{subsec:ilpTask};
  \item \textsc{sbass} for permutation and ground SBCs computation on \texttt{ENC1}; and 
  \item \textsc{clasp}$^\pi$ denoting the solving time of \textsc{clingo} on \texttt{ENC1}
        augmented with ground SBCs by \textsc{sbass}.%
        \footnote{In the online usage of instance-specific symmetry breaking,
                  the runtimes of \textsc{sbass} and \textsc{clasp}$^\pi$ add up.}
\end{enumerate*}
%
PUP instances are named according to the scheme
\textbf{[un-]type-zones}, where \textbf{un}
indicates unsatisfiability due to including one unit less than required,
\textbf{type} denotes the
\textit{double}, \textit{doublev}, and \textit{triple} collections
by dbl, dblv, or tri,
and the number of \textbf{zones} provides a measure of size.
Each run is limited to $600$ seconds, and the TO entries mark
unfinished runs.

Regarding different PUP encodings, \texttt{ENC2} leads
to more robust \textsc{clingo} performance than the simpler \texttt{ENC1}
encoding, even if ground SBCs from \textsc{sbass} are included for
\textsc{clasp}$^\pi$.
That is, apart from a few shorter runs with \texttt{ENC1}
on satisfiable instances (dbl\nobreakdash-30, dblv\nobreakdash-45, tri\nobreakdash-18, and tri\nobreakdash-21) in
Tables \ref{table:double}--\ref{table:triple},
\textsc{clingo} scales better with \texttt{ENC2}, never times out on
instances finished with \texttt{ENC1} or possibly \textsc{clasp}$^\pi$, and is able to solve some instances (dbl\nobreakdash-50, un\nobreakdash-tri\nobreakdash-12, and un\nobreakdash-tri\nobreakdash-15)
on which the latter two settings fail.
Considering that \textsc{sbass} produces ground SBCs within the
time limit for all instances,
the better performance with \texttt{ENC2} suggests that its
hand-crafted static symmetry breaking approach provides a more economic
trade-off between the compactness and completeness of introduced SBCs.
However, we checked that static symmetry breaking by unit labels
counteracts \textsc{sbass} to detect any instance-specific symmetries, so that
\texttt{ENC2} commits to value symmetries only.
\begin{table}[tb]
  \begin{minipage}{.5\linewidth}
      \centering
      \rowcolors{2}{gray!25}{white}
      \setlength{\tabcolsep}{7.5pt}
      \resizebox{6.5cm}{!}{
      \csvloop{
      file=PUP_double.csv,
      head to column names,
      before reading=\centering\sisetup{table-number-alignment=center},
      tabular={lrrrrr},
      table head=\toprule & \textbf{\textsc{ENC1}} & \textbf{ABK} & \textbf{ENC2} &  \textbf{SBASS} & $\mathbf{CLASP^\pi}$\\\midrule,
      command=\Instance & \BASE & \ABK & \ENC & \SBASS & \CLASP,
      table foot=\bottomrule}}
      \caption{Runtimes for PUP double}
      \label{table:double}
  \end{minipage}%
  \begin{minipage}{.5\linewidth}
      \centering
      \rowcolors{2}{gray!25}{white}
      \setlength{\tabcolsep}{7.5pt}
      \resizebox{6.5cm}{!}{
      \csvloop{
      file=PUP_doublev.csv,
      head to column names,
      before reading=\centering\sisetup{table-number-alignment=center},
      tabular={lrrrrr},
      table head=\toprule & \textbf{\textsc{ENC1}} & \textbf{ABK} & \textbf{ENC2} &  \textbf{SBASS} & $\mathbf{CLASP^\pi}$\\\midrule,
      command=\Instance & \BASE & \ABK & \ENC & \SBASS & \CLASP,
      table foot=\bottomrule}}
      \caption{Runtimes for PUP doublev}
      \label{table:doublev}
  \end{minipage} 
\end{table}
\begin{table}[tb]
  \centering
  \rowcolors{2}{gray!25}{white}
  \setlength{\tabcolsep}{7.5pt}
  \resizebox{7cm}{!}{
  \csvloop{
  file=PUP_triple.csv,
  head to column names,
  before reading=\centering\sisetup{table-number-alignment=center},
  tabular={lrrrrr},
  table head=\toprule & \textbf{\textsc{ENC1}} & \textbf{ABK} & \textbf{ENC2} &  \textbf{SBASS} & $\mathbf{CLASP^\pi}$\\\midrule,
  command=\Instance & \BASE & \ABK & \ENC & \SBASS & \CLASP,
  table foot=\bottomrule}}
  \caption{Runtimes for PUP triple}
  \label{table:triple}
\end{table}%

Indeed, when comparing \texttt{ENC2} to \textsc{clingo} on \texttt{ENC1} with 
first-order constraints learned by \textsc{ilasp} as $\mathit{ABK}$,
we observe further significant performance improvements thanks
to our approach, particularly on the unsatisfiable instances in
Tables \ref{table:double}--\ref{table:triple}.
That is, the learned $\mathit{ABK}$ enables \textsc{clingo}
to solve the considered PUP instances of three different types and
efficiently prunes the search space, which must be fully explored in case of unsatisfiability.
%
%
We checked that the learned constraints exploit the topology of instances to restrict the assignable units, particularly using the predicate \texttt{unit2sensorGEQ/2} for referring to the assignment of sensors.
The specific sets of efficient first-order constraints learned for each of the three PUP instance types are provided in our repository \citep{ilpsbc}.

\section{Conclusions}\label{sec:conclusions}
This paper introduces an approach to learn first-order constraints for complex combinatorial problems, 
which cannot be successfully tackled by previous, less scalable methods \citep{tagesc22}.
We devised and implemented two configurable strategies to generate the 
examples for an ILP task,
and extended the background knowledge by auxiliary predicates admitting a more compact
representation and potentially stronger propagation of constraints on value assignments.
Moreover, we introduced a custom scoring function taking the ground instances of
first-order constraints into account, along with a new conflict analysis method for \textsc{ilasp} that enables a much faster handling of positive examples with many answer sets.
The revised learning framework taking all proposed techniques together
is able to learn efficient first-order constraints from non-trivial
problem instances, as demonstrated on three kinds of PUP benchmarks --
a challenging configuration problem.
%
In the future, we hope 
to introduce automatic (re-)labeling schemes for constants appearing in instances to exploit common problem structure in a less input-specific way.
Moreover, we aim at extending 
our learning framework further to enable the model-based analysis and
breaking of symmetries for practically important optimization problems.
Beyond graph-oriented applications,
we plan to expand the scope of our constraint learning methods to further areas, such as scheduling domains with symmetries among tasks and resources, e.g., several instances of products, machines, or workers with the same skills.

\paragraph{Acknowledgments.}{This work was partially funded by
KWF project 28472,
cms electronics GmbH,
FunderMax GmbH,
Hirsch Armbänder GmbH,
incubed IT GmbH,
Infineon Technologies Austria AG,
Isovolta AG,
Kostwein Holding GmbH, and
Privatstiftung Kärntner Sparkasse.
We thank the anonymous reviewers for their valuable suggestions and comments.}


\end{document}